\newtheorem{proposition}{Proposition}
\newtheorem{assumption}{Assumption}
\crefname{theorem}{Thm.}{Thms.}
\crefname{proposition}{Prop.}{Props.}
\crefname{lemma}{lem.}{lems.}
\crefname{corollary}{Cor.}{Cors.}
\crefname{definition}{Def.}{Defs.}
\crefname{section}{Sec.}{Secs.}
\crefname{figure}{Fig.}{Figs.}
\crefname{problem}{Prob.}{Probs.}
\crefname{appendix}{App.}{Apps.}
\crefname{equation}{Eq.}{Eqs.}
\crefname{algorithm}{Alg.}{Algs.}
\crefname{assumption}{Ass.}{Ass.}
\providecommand{\algorithmname}{Algorithm}
\newcommand{\spara}[1]{\smallskip\noindent\textbf{#1}}
\title{Simplicial Gaussian Models: Representation and Inference}
\name{Lorenzo Marinucci$^{1}$, Gabriele D'Acunto$^{2,3}$, Paolo Di Lorenzo$^{2,3}$, Sergio Barbarossa$^{2}$ \vspace{-.2cm}
}
\address{
    $^{1}$Statistical Sciences Dept., Sapienza University, Rome, Italy \\
    $^{2}$Information Engineering, Electronics and Telecommunications Dept., Sapienza University, Rome, Italy \\
    $^{3}$National Inter-University Consortium for Telecommunications, Parma, Italy\\
    \{l.marinucci, gabriele.dacunto, paolo.dilorenzo, sergio.barbarossa\}@uniroma1.it
\vspace{-.4cm}}
\begin{document}
\ninept
\maketitle

\begin{abstract}
Probabilistic graphical models (PGMs) are powerful tools for representing statistical dependencies through graphs in high-dimensional systems.
However, they are limited to pairwise interactions.  
In this work, we propose the \emph{simplicial Gaussian model} (SGM), which extends Gaussian PGM to simplicial complexes. 
SGM jointly models random variables supported on vertices, edges, and triangles, within a single parametrized Gaussian distribution.  
Our model builds upon discrete Hodge theory and incorporates uncertainty at every topological level through independent random components.  
Motivated by applications, we focus on the marginal edge-level distribution while treating node- and triangle-level variables as latent.
We then develop a maximum-likelihood inference algorithm to recover the parameters of the full SGM and the induced conditional dependence structure.
Numerical experiments on synthetic simplicial complexes with varying size and sparsity confirm the effectiveness of our algorithm.
\end{abstract}
\begin{keywords}
Topological signal processing, simplicial complexes, Gaussian Markov random fields, probabilistic modeling.
\end{keywords}
\vspace{-0.2cm}\section{Introduction}
\label{sec:intro}
Probabilistic Graphical Models (PGMs) allows to represent and reason on high-dimensional systems under uncertainty \cite{Koller}, offering compact representations of complex joint distributions that combine probability with graph theory and enable efficient inference and learning algorithms. 
Thus, they are widely used in several applications, including computer vision, computational biology, and spatial statistics \cite{bishop,scutari,lindgren2011spde}. 
In a PGM, random variables are associated with the vertices of a graph, while edges encode statistical dependencies. 
The meaning of the edges depend on the graph type: Bayesian Networks capture directional dependencies through directed acyclic graphs (DAGs) \cite{pearl88}, whereas Markov Random Fields (MRFs) model symmetric conditional dependencies with undirected graphs,  thanks to the \emph{Markov property} \cite{besag1974spatial}. 
A well-studied family is Gaussian Markov Random Fields (GMRFs), i.e., MRFs that model Gaussian random variables \cite{rue2005gaussian}. 
Indeed, conditional dependencies in the Gaussian distribution are encoded by the precision matrix, thus allowing to learn GMRF from data with efficient algorithms \cite{friedman2008glasso}.

However, PGMs are inherently limited to graphs. 
First, PGMs typically associate random variables with individual nodes (sets of cardinality one), while in many settings random quantities naturally relates with larger sets.
Examples include data traffic in communication networks or water flows in distribution networks, where measurements are collected on the links of the networks \cite{lambiotte2019higherorder,schaub2021signal,cattai2025physicsinformedtopologicalsignalprocessing}. 
Second, PGMs are restricted to modeling pairwise dependencies via edges. 
However, many complex systems exhibit interactions involving groups of more than two variables, such as simultaneous activations of neurons in brain networks or flow conservation laws in power grids \cite{Courtney_2016,kanari2018topological,petri2014homological}. 
Such higher-order interactions cannot be effectively captured by a graph, motivating the use of higher-order topological descriptors. 
Simplicial complexes are a class of hypergraphs that address these limitations and have found applications in signal processing, applied algebraic topology, and machine learning \cite{barbarossa2020topological,carlsson2009tda,battiloro2023gsan,marinucci2025}. 
They generalize graphs by including higher-dimensional building blocks called simplices, which allow encoding multi-way interactions. 
Further, their hierarchical structure consistently relates simplices of adjacent dimension through incidence relations. 
Notably, simplicial-based representations have solid algebraic foundation corresponding to the discrete Hodge theory \cite{Eckmann1944/45}, which enables a principled treatment of data supported on simplices of different orders. 
For these reasons, we propose \emph{simplical Gaussian models} (SGMs) to extend GMRFs to simplicial complexes. \\ 
\noindent\textbf{Related works.} The bulk of existing work on simplicial and cell complexes focuses on deterministic settings \cite{isufi2025topological}.
Conversely, SGMs models uncertainty and stochastic variability. 
To date, only a limited number of contributions address probabilistic modeling on these domains \cite{sardellitti2022probabilistic, yang2024hodgeedgegp,alain2024gaussian,gurugubelli2024simplicial}. In detail, the work in \cite{sardellitti2022probabilistic} presents a probabilistic framework over simplicial complexes together with an inference algorithm, referred to as the simplicial lasso, to estimate the statistical dependencies between random variables supported on edges. 
Authors in \cite{yang2024hodgeedgegp,alain2024gaussian} introduce a class of Matérn fields defined over simplicial and cell complexes, extending the classical Matérn kernels for vector fields. 
In \cite{gurugubelli2024simplicial} the authors propose a probabilistic method to infer the structure of a simplicial complex from random edge observations. However, previous works focus exclusively on modeling edge variables, leaving open the problem of characterizing the joint distribution of variables across all levels of a simplicial complex through a principled, data-driven inference procedure.\\ 
\noindent\textbf{Contributions.} The key innovations of this paper are twofold. \emph{First}, differently from previous approaches, we propose SGM to jointly model random variables supported on all simplices of a simplicial complex through a \textit{single, parametrized} Gaussian distribution.  Building upon Hodge theory, SGM associates an independent random component with each simplex, thereby introducing uncertainty at every topological level. Consequently, the distribution for edge-related signals is derived as the marginal of the full SGM, thus inherently accounting for the latent contributions of node- and triangle-level variables. This differs from existing works, which instead focus on directly modeling edge-supported variables. 
\emph{Second}, assuming only the availability of edge-level signals, we then develop a statistically principled inference algorithm based on maximum-likelihood estimation to recover the full SGM parameters and, consequently, the conditional dependence structure of vertex, edge, and triangle variables. Unlike the simplicial lasso method in \cite{sardellitti2022probabilistic}, this approach performs a joint estimation of parameters over all levels of the complex. We assess the performance of our algorithm via synthetic simulations, demonstrating its effectiveness. 
\vspace{-0.15cm}\section{BACKGROUND}\vspace{-0.2cm}
\label{sec:background}
\textbf{Simplicial Complexes.}
A \textit{simplicial complex} $\mathcal{X}$ is a pair $(\mathcal{V},\mathcal{S})$, where $\mathcal{V}$ is a set of vertices and $\mathcal{S}$ a family of subsets of $\mathcal{V}$ such that:  
(i) for every $v \in \mathcal{V}$, $\{v\} \in \mathcal{S}$;  
(ii) if $\tau \in \mathcal{S}$ and $\sigma \subset \tau$, then $\sigma \in \mathcal{S}$.  
An element of $\mathcal{S}$ with $k+1$ vertices is called a $k$-\textit{simplex}, or simplex of order $k$.
The largest order of simplices in a simplicial complex defines its \textit{dimension}. 
 By assigning one of the two possible orientations to each simplex of a simplicial complex, its entire structure can be described through a set of incidence matrices $\{\mathbf{B}_k\}$ (see \cite{barbarossa2020topological}).  
Given a $k$-simplex $\sigma_i^{k}$ and a $(k-1)$-simplex $\sigma_j^{k-1}$, the incidence matrix entry is
\[
\mathbf{B}_k(i,j)=
\begin{cases}
  +1\ , & \text{if $\sigma_j^{k-1} \subset \sigma_i^{k}$ with coherent orientation},\\[2pt]
  -1\ , & \text{if $\sigma_j^{k-1} \subset \sigma_i^{k}$ with opposite orientation},\\[2pt]
   0\ , & \text{otherwise}.
\end{cases}
\]
\textbf{Hodge Laplacians.} From the incidence matrices, we construct the \textit{combinatorial Laplacians}, which generalize the graph Laplacian to simplicial complexes.  
They consist of the \textit{lower Laplacian} $\mathbf{L}_{k,d} = \mathbf{B}_k^\top \mathbf{B}_k$ and the \textit{upper Laplacian} $\mathbf{L}_{k,u} = \mathbf{B}_{k+1}\mathbf{B}_{k+1}^\top$,  
where $k$ denotes the simplex order.  
These matrices encode adjacency relations among $k$-simplices:  
$\mathbf{L}_{k,d}$ captures \textit{lower adjacency}, i.e., two $k$-simplices share a common $(k-1)$-face,  
whereas $\mathbf{L}_{k,u}$ captures \textit{upper adjacency}, i.e., two $k$-simplices are cofaces of a common $(k+1)$-simplex.  
Their sum $\mathbf{L}_k = \mathbf{L}_{k,d} + \mathbf{L}_{k,u}$ is known as the \textit{Hodge $k$-Laplacian}. \\
\textbf{Topological Signals.} 
From now on, w.l.o.g., we focus on $2$-dimensional simplicial complexes $\mathcal{X} = (\mathcal{V}, \mathcal{E}, \mathcal{T})$, 
where $\mathcal{V}$, $\mathcal{E}$, and $\mathcal{T}$ denote the sets of vertices, edges, and triangles, respectively.  
A \textit{vertex signal} is a mapping $\mathbf{x}_V : \mathcal{V} \rightarrow \mathbb{R}$,
which, upon fixing an ordering of the vertices, can be represented as a vector 
$\mathbf{x}_V \in \mathbb{R}^{|\mathcal{V}|}$.  
Analogously, we define \textit{edge signals}  and \textit{triangle signals} $\mathbf{x}_E \in \mathbb{R}^{|\mathcal{E}|}, \
\mathbf{x}_T \in \mathbb{R}^{|\mathcal{T}|}$,
obtained by assigning an ordering to edges and triangles, respectively.  
When these signals are stochastic, we denote them by uppercase letters $X_V$, $X_E$, and $X_T$, and interpret them as random vectors in the corresponding Euclidean spaces.\\
\textbf{Hodge Decomposition.} 
The property $\mathbf{B}_k \mathbf{B}_{k+1} = \mathbf{0}$ of the incidence matrices induces an orthogonal \textit{Hodge decomposition} of the signal spaces.  
For a $2$-dimensional simplicial complex, this yields
\begin{align}
\mathbb{R}^{|\mathcal{V}|} &= 
\mathrm{ker}(\mathbf{L}_0)
\oplus
\mathrm{im}(\mathbf{B}_1)\ , \label{eq:Hodge_dec_1}\\[4pt]
\mathbb{R}^{|\mathcal{E}|} &= 
\mathrm{im}(\mathbf{B}_1^\top)
\oplus
\mathrm{ker}(\mathbf{L}_1)
\oplus
\mathrm{im}(\mathbf{B}_2) \ , \label{eq:Hodge_dec_2} \\[4pt]
\mathbb{R}^{|\mathcal{T}|} &= 
\mathrm{im}(\mathbf{B}_2^\top)
\oplus
\mathrm{ker}(\mathbf{L}_2) \ , \label{eq:Hodge_dec_3}
\end{align}
where $\mathrm{im}(\cdot)$ and $\mathrm{ker}(\cdot)$ denote the image and kernel of a linear operator. At edge level, this decomposition has a precise differential interpretation. Given an edge signal $\mathbf{x}_E \in \mathbb{R}^{|\mathcal{E}|}$, there exists three orthogonal vectors such that 
\begin{equation}
\mathbf{x}_E = \mathbf{B}_1^\top \mathbf{x}_V + \mathbf{B}_2 \mathbf{x}_T + \mathbf{h}_E.
\end{equation}
The first term is the \textit{irrotational} component, resulting from the \textit{discrete gradient} of a vertex signal $\mathbf{x}_V$. The second term is the \textit{solenoidal} component, and represents the divergence-free part of $\mathbf{x}_E$. Finally, the term $\mathbf{h}_E \in \ker(\mathbf{L}_1)$ is the \textit{harmonic} component, and reflects purely topological information.\\
\textbf{Gaussian Markov Random Fields.} 
An MRF is a set of random variables $\{X_i : i \in \mathcal{V}\}$ indexed by the nodes of an undirected graph 
$G = (\mathcal{V}, \mathcal{E})$ and satisfying the \textit{Markov property}:
\begin{equation}
X_i \perp\!\!\!\perp X_j \;|\; X_{\mathcal{V} \setminus \{i,j\}}
\quad \Longleftrightarrow \quad (i,j) \notin \mathcal{E},
\end{equation}
that is, two variables are conditionally independent given all the others if and only if 
no edge connects their nodes.  
An edge thus encodes a \textit{direct statistical interaction} between its endpoints. When the variables jointly follow a multivariate Gaussian distribution 
$X \sim \mathcal{N}(\mu,\Sigma)$ with positive definite covariance matrix $\Sigma$, 
the model is called a \textit{Gaussian Markov Random Field} (GMRF).  
Without loss of generality, we assume $\mu = 0$ and parameterize the distribution through 
its \textit{precision matrix} $\bm{\Omega} = \Sigma^{-1}$.
This is because in GMRF the precision matrix encodes the conditional independence structure of the model \cite{rue2005gaussian}:
\begin{equation}
\bm{\Omega}_{ij} = 0 \quad \Longleftrightarrow \quad
X_i \perp\!\!\!\perp X_j \;|\; X_{V \setminus \{i,j\}},
\qquad i \neq j. 
\end{equation}
Hence, the sparsity pattern of $\bm{\Omega}$ defines the edges of the MRF.\\
\textbf{Schur Complement.} Consider a zero-mean Gaussian random vector $X = [Y, W]
\sim \mathcal{N}\!\bigl(0,\bm{\Omega}^{-1}\bigr)$, where 
\begin{equation}\label{eq:partitioned_precision}
\bm{\Omega} =
\begin{bmatrix}
\bm{\Omega}_{Y,Y} & \bm{\Omega}_{Y,W} \\
\bm{\Omega}_{Y,W}^\top& \bm{\Omega}_{W,W}
\end{bmatrix},  
\end{equation}
is the precision matrix of $X$, block-partitioned according to two groups of variables $Y$ and $W$. A key property of Gaussian distributions is that the precision matrix of $Y$, denoted $\bm{\Omega}_Y$, is given by 
\begin{equation}\label{eq:Schur_complement}
\bm{\Omega}_Y=\bm{\Omega}_{Y,Y} - \bm{\Omega}_{Y,W} \bm{\Omega}_{W,W}^{-1} \bm{\Omega}_{Y,W}^\top,
\end{equation}
which is called its \textit{Schur complement}.  
Hence, $Y \sim \mathcal{N}\!\bigl(0,\;\bm{\Omega}_Y^{-1} \bigr).$ Further, the block partition \eqref{eq:partitioned_precision} allows to derive the linear relation 
\begin{equation}\label{eq:linear_regression}
Y = -\bm{\Omega}_{Y,Y}^{-1} \bm{\Omega}_{Y,W}W + Z,
\qquad
Z \sim \mathcal{N}(0, \bm{\Omega}_{Y,Y}^{-1}),
\end{equation}
where $Z$ is an independent Gaussian innovation term. 
\vspace{-0.15cm}\section{Simplicial Gaussian model}
Throughout this work, we focus on $2$-dimensional simplicial complexes $\mathcal{X}$, although our model 
extends to arbitrary order. 
Let $\{X_\sigma\}$ be a family of random variables associated with the vertices, edges, and triangles of $\mathcal{X}$, and the corresponding 
collections at each level denoted by the random vectors $X_V$, $X_E$, and $X_T$, respectively. 
We assume that these variables jointly follow a non-singular zero-mean Gaussian distribution, $X \sim \mathcal{N}(0, \bm{\Omega}^{-1})$, parameterized by the precision matrix $\bm{\Omega}$ encoding conditional independencies.
Since we build upon Hodge theory, $\bm{\Omega}$ has a block structure highlighting the role of incidence relations in shaping \emph{direct interactions} across variables. 
Specifically, in the SGM we have
\begin{equation}\label{eq:precision-matrix}
\bm{\Omega} =
\begin{bmatrix}
\mathbf{D}_V^{-1} & -\mathbf{B}_1 & 0 \\
-\mathbf{B}_1^\top & \mathbf{D}_E^{-1} & -\mathbf{B}_2 \\
0 & -\mathbf{B}_2^\top & \mathbf{D}_T^{-1}
\end{bmatrix}\,,
\end{equation}
with $\mathbf{D}_V \in \mathbb{R}^{|\mathcal{V}|\times |\mathcal{V}|}$, $\mathbf{D}_E \in \mathbb{R}^{|\mathcal{E}|\times |\mathcal{E}|}$, 
$\mathbf{D}_T \in \mathbb{R}^{|\mathcal{T}|\times |\mathcal{T}|}$ positive diagonal matrices, and 
$\mathbf{B}_1$,   $\mathbf{B}_2$ the incidence matrices of $\mathcal{X}$.
By \Cref{eq:linear_regression}, we can express each random vector in terms of the others plus an independent Gaussian term:
\begin{equation}\label{eq:SGMdecomposition}
    \begin{aligned}
        X_V &= \mathbf{D}_V \mathbf{B}_1 X_E + Z_V, \quad Z_V \sim \mathcal{N}(0, \mathbf{D}_V)\,, \\
        X_E &= \mathbf{D}_E\mathbf{B}_1^\top X_V + \mathbf{D}_E\mathbf{B}_2 X_T + Z_E, \quad Z_E \sim \mathcal{N}(0, \mathbf{D}_E)\,, \\
        X_T &= \mathbf{D}_T \mathbf{B}_2^\top X_E + Z_T, \quad Z_T \sim \mathcal{N}(0, \mathbf{D}_T)\,. 
    \end{aligned}    
\end{equation}
\Cref{eq:SGMdecomposition} shows that the SGM induces an Hodge-like decomposition of the random signals at any level of the complex. 
The incidence matrices, rescaled by the diagonal weights $\mathbf{D}_V, \mathbf{D}_E, \mathbf{D}_T$, encode the deterministic dependence on adjacent levels, 
whereas the Gaussian terms $Z_V$, $Z_E$, and $Z_T$ provide independent stochastic fluctuations associated with each simplex of $\mathcal{X}$. Note that the harmonic components of the classical Hodge decomposition 
(see \Cref{eq:Hodge_dec_1,eq:Hodge_dec_2,eq:Hodge_dec_3}) are not present in the SGM, 
as they encode purely topological information and do not couple signals across levels, 
thus not affecting the modeled conditional dependencies.

\spara{Edge-level model.} 
Motivated by applications (cf. \Cref{sec:intro}), we analyze the marginal distribution induced by the SGM on the edge variables, i.e., the random vector $X_E$. 
Random variables on nodes and triangles are latent, and we can infer their contribution to the SGM exclusively through their impact on the edge distribution. Since we expect edge variability to arise primarily from vertex and triangle contributions, we assume spatially homogeneous edge noise, so that residual fluctuations are modeled as independent of edge location. 
\begin{assumption}\label{ass:homvar}
    The variance of the fluctuations is homogeneous across edges, that is, $\mathbf{D}^{-1}_E = k \mathbf{I}$ for some $k>0$.     
\end{assumption} 
\begin{proposition}
Under the model defined in~\Cref{eq:precision-matrix} and Assumption 1, the marginal precision matrix of the edge vector $X_E$ is
\begin{equation}\label{eq:precision_E}
 \bm{\Omega}_E \;=\; k\,\mathbf{I}_E - \mathbf{B}_1^\top \mathbf{D}_V \mathbf{B}_1 
- \mathbf{B}_2 \mathbf{D}_T \mathbf{B}_2^\top.
\end{equation}

\end{proposition}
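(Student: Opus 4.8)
The plan is to identify the marginal precision of $X_E$ as a Schur complement of the full precision matrix $\bm{\Omega}$ in \Cref{eq:precision-matrix}, and then to simplify it using the block-diagonal structure of the node--triangle block together with \Cref{ass:homvar}. Concretely, I would take $Y = X_E$ as the target block and collect the latent variables into $W = [X_V,\, X_T]$, the two blocks to be marginalized out. Since the Schur-complement identity \eqref{eq:Schur_complement} is stated for a partition in which $Y$ precedes $W$, the first step is to reorder the variables via the symmetric permutation that moves the edge block to the top. A symmetric permutation acts as a congruence on $\bm{\Omega}$ and merely relabels coordinates, so it leaves the joint Gaussian law, and hence the marginal distribution of $X_E$, unchanged; this makes \eqref{eq:Schur_complement} directly applicable.

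After reordering, reading off the blocks of $\bm{\Omega}$ gives $\bm{\Omega}_{Y,Y} = \mathbf{D}_E^{-1}$, the coupling block $\bm{\Omega}_{Y,W} = [\,-\mathbf{B}_1^\top \ \ -\mathbf{B}_2\,]$, and the latent block
\[
\bm{\Omega}_{W,W} = \begin{bmatrix} \mathbf{D}_V^{-1} & 0 \\ 0 & \mathbf{D}_T^{-1} \end{bmatrix}.
\]
The key structural observation is that the vertex--triangle corner of $\bm{\Omega}$ is zero: vertices and triangles share no direct incidence relation, so $\bm{\Omega}_{W,W}$ is block diagonal and positive definite, its diagonal blocks being the inverses of the positive diagonal matrices $\mathbf{D}_V$ and $\mathbf{D}_T$. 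Consequently its inverse is simply $\mathrm{diag}(\mathbf{D}_V, \mathbf{D}_T)$, and the Schur correction $\bm{\Omega}_{Y,W}\bm{\Omega}_{W,W}^{-1}\bm{\Omega}_{Y,W}^\top$ decouples into two independent contributions, one per latent level.

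Substituting into \eqref{eq:Schur_complement} then yields
\[
\bm{\Omega}_E = \mathbf{D}_E^{-1} - \mathbf{B}_1^\top \mathbf{D}_V \mathbf{B}_1 - \mathbf{B}_2 \mathbf{D}_T \mathbf{B}_2^\top,
\]
and invoking \Cref{ass:homvar} to replace $\mathbf{D}_E^{-1}$ by $k\mathbf{I}_E$ gives the claimed \Cref{eq:precision_E}. The computation is essentially mechanical, so I do not anticipate a genuine obstacle; the only points requiring care are the bookkeeping of the reordering, namely ensuring the correct sign and transpose pattern of $\bm{\Omega}_{Y,W}$ and of its transpose, and confirming that $\bm{\Omega}_{W,W}$ is invertible so that the Schur complement is well defined. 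Both follow immediately from the block-diagonal, positive-definite structure inherited from the assumed nonsingularity of $\bm{\Omega}$.
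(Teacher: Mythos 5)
Your proposal is correct and follows essentially the same route as the paper's proof: the same partition $Y = X_E$, $W = [X_V, X_T]$, the same block identification $\bm{\Omega}_{Y,Y} = \mathbf{D}_E^{-1}$, $\bm{\Omega}_{Y,W} = [-\mathbf{B}_1^\top, -\mathbf{B}_2]$, $\bm{\Omega}_{W,W} = \mathrm{diag}(\mathbf{D}_V^{-1}, \mathbf{D}_T^{-1})$, followed by the Schur complement \eqref{eq:Schur_complement} and \Cref{ass:homvar}. Your additional remarks on the permutation being a congruence and on the invertibility of $\bm{\Omega}_{W,W}$ are details the paper leaves implicit, but they do not change the argument.
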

\begin{proof}
Up to a permutation of variables, consider the partition of the random vector 
$\mathbf{X} = [Y, W]$, with $Y := X_E$ and 
$W := [X_V, X_T]$.  
Given this reordering, the precision matrix $\bm{\Omega}$ in \Cref{eq:precision-matrix} can be written in block form as in \Cref{eq:partitioned_precision}, with 
$\bm{\Omega}_{Y,Y} = \mathbf{D}_E^{-1}$, 
$\bm{\Omega}_{Y,W} = [-\mathbf{B}_1^\top, -\mathbf{B}_2 ]$, and
$\bm{\Omega}_{W,W} = \mathrm{diag}(\mathbf{D}_V^{-1}, \mathbf{D}_T^{-1})$.\\
Applying the Schur complement formula \Cref{eq:Schur_complement}, we obtain
\begin{equation}
\bm{\Omega}_E 
= \bm{\Omega}_{Y,Y} - \bm{\Omega}_{Y,W}\,\bm{\Omega}_{W,W}^{-1}\,\bm{\Omega}_{Y,W}^\top.
\end{equation}
Thus, by simple substitutions and block-matrix multiplication we get
\begin{equation}\label{eq:OmegaE_schur}
    \bm{\Omega}_E = \mathbf{D}_E^{-1} - \mathbf{B}_1^\top \mathbf{D}_V \mathbf{B}_1 - \mathbf{B}_2 \mathbf{D}_T \mathbf{B}_2^\top \,. 
\end{equation}
Finally, using Assumption~1 gives 
\Cref{eq:precision_E}.
\end{proof}
According to \Cref{eq:precision_E}, the non-zero entries in $\bm{\Omega}_E$ are constrained by the simplicial complex topology. 
The second term in \Cref{eq:OmegaE_schur} captures dependencies between edges that share a common vertex. 
These dependencies arise since the vertex carries a latent stochastic component that influences all incident edges. 
Similarly, the third term captures dependencies between edges that are co-faces of the same triangle, since on the triangle resides a latent stochastic component that simultaneously affects all its edges. 
\begin{algorithm}
\caption{Edge-level Inference}
\begin{algorithmic}
\State \textbf{Input:} Edge signals $\mathbf{x}_E[1],\dots,\mathbf{x}_E[M]$; compute 
$\mathbf{C} = \tfrac{1}{M}\sum_{i=1}^M \mathbf{x}_E[i]\mathbf{x}_E[i]^\top$.
\State \textbf{Initialize:} $k^{(0)}$, $\tilde{\mathbf{d}}_V^{(0)}$, 
$\tilde{\mathbf{d}}_T^{(0)}$ satisfying (a) and (b).
\For{$n=0,1,2,\dots$ until convergence}
    \State Update $k^{(n+1)}$ by solving
    \[\max_{k>0} 
\; N_E \log k - k\, \operatorname{tr}(\mathbf{C}) + f_V(\tilde{\mathbf{d}}_V^{(n)},k) + f_T(\tilde{\mathbf{d}}_T^{(n)},k);\]
    \State Update $\tilde{\mathbf{d}}_V^{(n+1)}$ by solving
    \[\max_{\tilde{\mathbf{d}}_V > 0,\, \text{(a)}} f_V(\tilde{\mathbf{d}}_V, k^{(n+1)});\]
    \State Update $\tilde{\mathbf{d}}_T^{(n+1)}$ by solving
    \[\max_{\tilde{\mathbf{d}}_T \ge 0,\, \text{(b)}} f_T(\tilde{\mathbf{d}}_T, k^{(n+1)});\]
\EndFor
\State \textbf{Output:} Estimates $\hat{k}$, $\hat{\mathbf{d}}_V$, $\hat{\mathbf{d}}_T$.
\end{algorithmic}\label{alg:inference}
\end{algorithm}

\spara{Inference algorithm for $\bm{\Omega}_E$.} 
Let $\mathbf{x}_E[i]$, $i=1,\dots,M$, denote realizations of $X_E \sim \mathcal{N}(0,\bm{\Omega}_E^{-1})$. 
Since edge signals are observed, we assume w.l.o.g. the underlying graph is known.
Conversely, the set of triangles must be inferred from data together with $\bm{\Omega}_E$. 
To this end, we start by assuming all $3$-cliques in the graph filled with triangles, and we build the corresponding incidence matrix $\mathbf{B}_2$.
We then consider the maximum likelihood criterion for estimating $\bm{\Omega}_E$:
\begin{equation}\label{prob:P1}
\hat{\bm{\Omega}}_E = \arg\max_{\bm{\Theta}_E \succ 0} 
\Big( \log\det(\bm{\Theta}_E) - \operatorname{tr}(\mathbf{C} \, \bm{\Theta}_E) \Big)\,,
\tag{P1}
\end{equation}
where $\mathbf{C} = \frac{1}{M} \sum_{i=1}^{M} \mathbf{x}_E[i]\mathbf{x}_E[i]^\top$ is the sample covariance matrix. 
By \Cref{eq:precision_E}, inference reduces to estimating the diagonal entries of $\mathbf{D}_V$ and $\mathbf{D}_T$, viz. $\mathbf{d}_V$ and $\mathbf{d}_T$, 
and a parameter $k$.  
Importantly, null entries---or below a given threshold---in the estimated $\mathbf{d}_T$ indicate the absence of triangles, thus enabling joint learning of parameters and topology.
Hence, \eqref{prob:P1} becomes:
\begin{equation}\label{opt_problem_compact}
    \begin{aligned}
        &\max_{\mathbf{d}_T\geq0;\mathbf{d}_V,k >0}
        \; \log\det\!\big(\bm{\Omega}_E(\mathbf{d}_V,\mathbf{d}_T,k)\big)
        - \operatorname{tr}\!\big(\mathbf{C}\,\bm{\Omega}_E(\mathbf{d}_V,\mathbf{d}_T,k)\big) \; \notag  \\[4pt]
        &\quad\text{s. t.:} 
       \qquad \bm{\Omega}_E(\mathbf{d}_V,\mathbf{d}_T,k) \succ 0\,. 
    \end{aligned}
    \tag{P2}
\end{equation}
Note that the objective function is strictly concave in $\bm{\Omega}_E$---an affine function of the parameters $(\mathbf{d}_V,\mathbf{d}_T,k)$---and the feasible set is convex. 
Hence, the  optimization problem is convex and can be solved using classical numerical methods \cite{CVX}. 

However, the intricate structure of the objective function makes the optimization problem challenging in practice. 
To address this issue, we propose a numerically efficient block–coordinate algorithm based on a reparameterization of the variables. 
Specifically, we introduce $\tilde{\mathbf{d}}_V = \tfrac{1}{k}\,\mathbf{d}_V$ and $\tilde{\mathbf{d}}_T = \tfrac{1}{k}\,\mathbf{d}_T$. 
Using the composition property $\mathbf{B}_2 \mathbf{B}_1 = 0$, the edge-level precision matrix factorizes as
\begin{equation}\label{eq:dec_precision_E}
\bm{\Omega}_E 
= k\big(\mathbf{I}_E - \mathbf{B}_1^\top \operatorname{diag}(\tilde{\mathbf{d}}_V)\mathbf{B}_1\big)
   \big(\mathbf{I}_E - \mathbf{B}_2 \operatorname{diag}(\tilde{\mathbf{d}}_T)\mathbf{B}_2^\top\big)\,,
\end{equation}
which allows us to decompose the log-determinant term.  
Accordingly, \eqref{opt_problem_compact} can be reformulated as
\begin{equation}
    \begin{aligned}\label{opt_problem_ext}
    &\max_{\mathbf{d}_T\geq0;\mathbf{d}_V,k >0} 
    \; N_E \log k - k\, \operatorname{tr}(\mathbf{C}) + f_V(\tilde{\mathbf{d}}_V,k) + f_T(\tilde{\mathbf{d}}_T,k) \\
    &\quad\text{s.t.:} 
    \quad\text{(a)}\;\; \mathbf{I}_E - \mathbf{B}_1^\top \operatorname{diag}(\tilde{\mathbf{d}}_V)\mathbf{B}_1 \succ 0\,, \notag\\
    &\quad \ \  \qquad\text{(b)}\;\; \mathbf{I}_E - \mathbf{B}_2 \operatorname{diag}(\tilde{\mathbf{d}}_T)\mathbf{B}_2^\top \succ 0\,, \notag
    \end{aligned}
    \tag{P3}
\end{equation}
with
\begin{align}
f_V(\tilde{\mathbf{d}}_V,k) 
&= \log\det\!\big(\mathbf{I}_E - \mathbf{B}_1^\top \operatorname{diag}(\tilde{\mathbf{d}}_V)\mathbf{B}_1\big) + \notag\\
&\qquad\qquad - k\, \operatorname{tr}\!\Big(\mathbf{C}\, \mathbf{B}_1^\top \operatorname{diag}(\tilde{\mathbf{d}}_V)\mathbf{B}_1\Big)\,, \notag\\[2mm]
f_T(\tilde{\mathbf{d}}_T,k) 
&= \log\det\!\big(\mathbf{I}_E - \mathbf{B}_2 \operatorname{diag}(\tilde{\mathbf{d}}_T)\mathbf{B}_2^\top\big) +\notag\\
&\qquad\qquad - k\, \operatorname{tr}\!\Big(\mathbf{C}\, \mathbf{B}_2 \operatorname{diag}(\tilde{\mathbf{d}}_T)\mathbf{B}_2^\top\Big)\,. \notag
\end{align}
Note that by \Cref{eq:dec_precision_E} the constraint in \eqref{opt_problem_compact} is equivalent to the pair of conditions (a) and (b) in \eqref{opt_problem_ext}.
The reformulation in \eqref{opt_problem_ext} reveals a natural scale separation, which directly motivates the use of a block--coordinate method. 
The main steps of the optimization method are outlined in \Cref{alg:inference}. The optimization alternates between maximizing the objective with respect to variables $k$, $\mathbf{d}_V\geq0$, and $\mathbf{d}_T$. In particular, step 4 admits a closed-form solution (details omitted for brevity), while the remaining sub-problems can be solved using standard convex optimization algorithms. 

\begin{figure*}[!t]
    \centering
    \includegraphics[width=\textwidth]{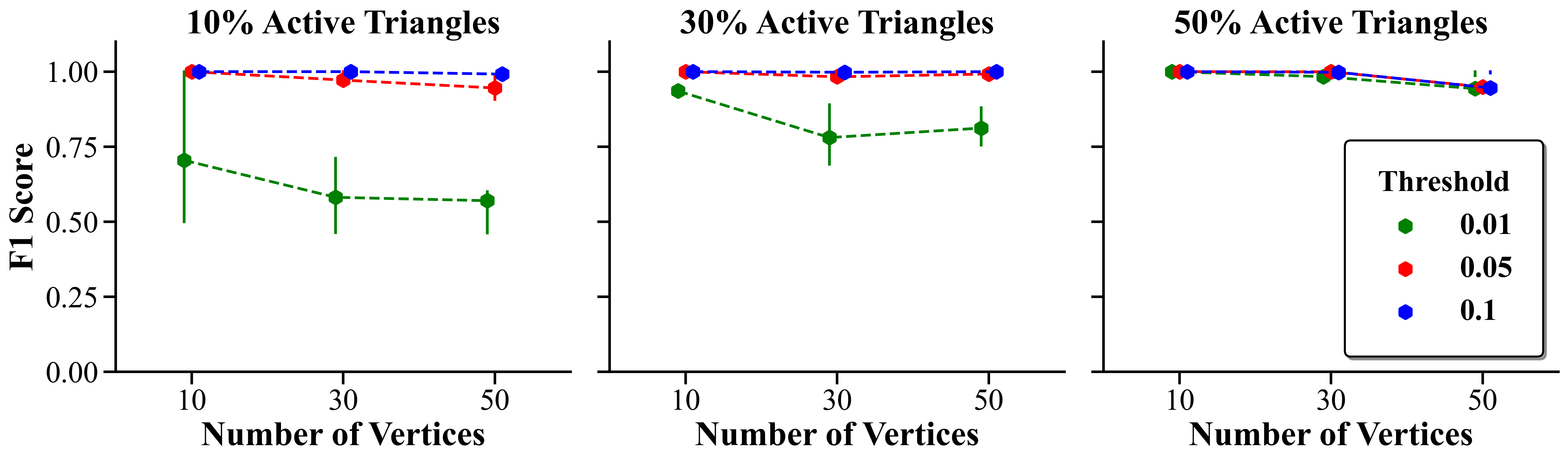}
    \caption{F1-score for triangle detection after thresholding the estimated 
    parameters $\hat{\mathbf{d}}_T$ at three different levels ($0.01$, $0.05$, $0.1$).}
    \label{fig:f1score}
\end{figure*}
\begin{figure}[!t]
    \centering
    \includegraphics[width = 0.95\linewidth]{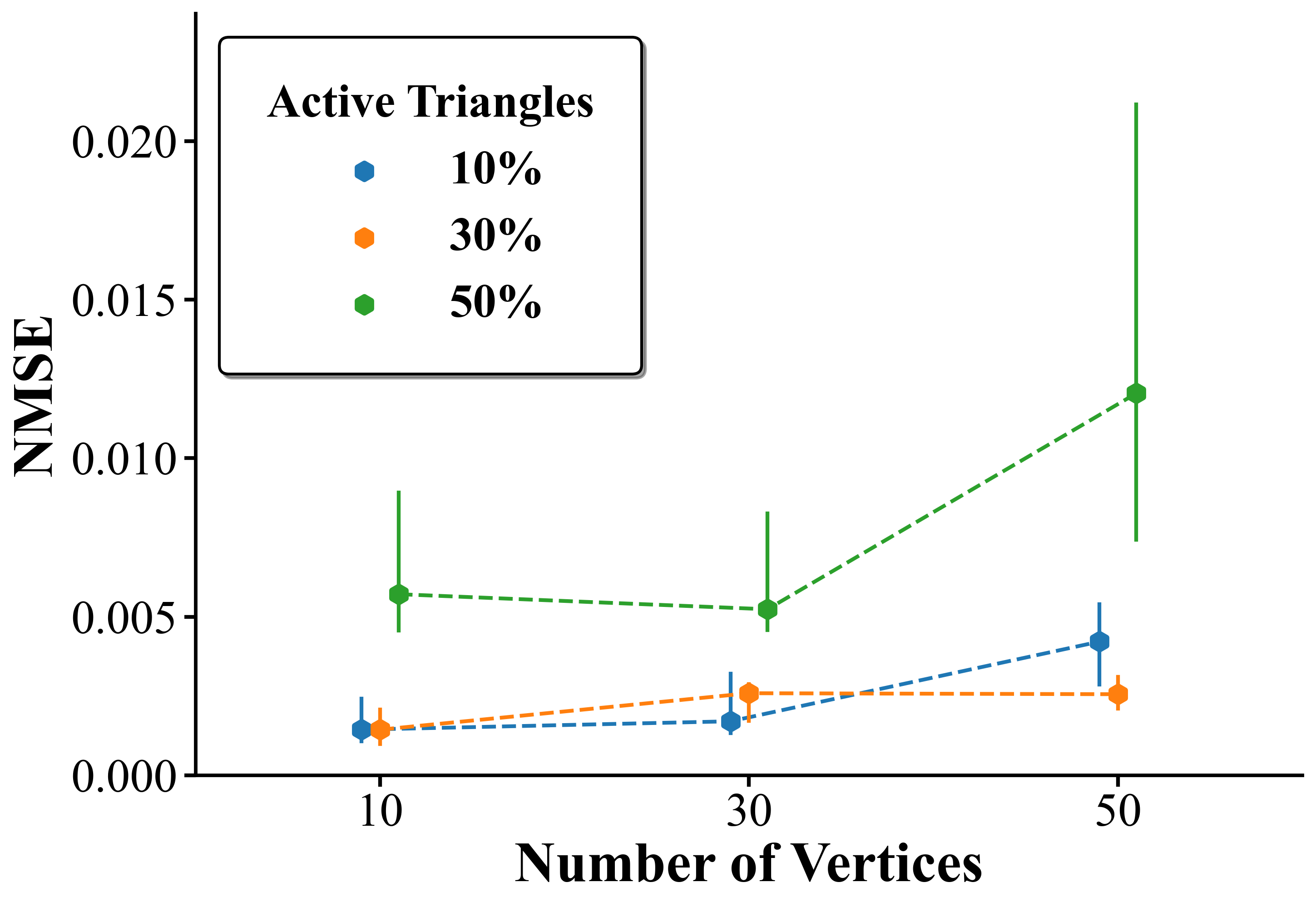}
\caption{NMSE for $(k,\mathbf{d}_V,\mathbf{d}_T)$ vs number of vertices.}
    \label{fig:synth}
\end{figure}
\vspace{-0.2cm}\section{NUMERICAL RESULTS}
\label{sec:num_results}
We test \Cref{alg:inference} on random $2$-dimensional simplicial complexes, generated as follows.  
We build a random graph with $|V|$ nodes and edge probability $q=0.3$, and fill a percentage $p$ of its $3$-cliques with triangles.
To study the sensitivity w.r.t. size and sparsity of the complex, we repeat this procedure for $(|V|, p) \in \{10, 30, 50\} \times \{10\%,30\%,50\%\}$, simulating for each setting $20$ different complexes.
For each complex, we then build a block-structured precision matrix $\bm{\Omega}$ as in \Cref{eq:precision-matrix} by specifying a positive parameter for each vertex and triangle, collected in the vectors $\mathbf{d}_V$ and $\mathbf{d}_T$, respectively, and a common positive parameter $k$ for all edges.  
The entries of $\mathbf{d}_V$ and $\mathbf{d}_T$ are sampled uniformly at random in $[0.2,1]$, while $k$ is chosen sufficiently large to ensure that $\bm{\Omega} \succ 0$.
We then draw $50000$ i.i.d. samples from $\mathcal{N}(0,\bm{\Omega}^{-1})$, thus generating observations on vertices, edges, and triangles.
Our goal is to recover $k$, $\mathbf{d}_V$, and $\mathbf{d}_T$ from the edge observations using \Cref{alg:inference}.
The monitored metrics are: \emph{(i)} the F1 score to evaluate the performance in the retrieval of active triangles; and \emph{(ii)} the normalized mean squared error (NMSE) 
\begin{equation}\label{eq:nmse}
    \mathrm{NMSE} = \frac{\|\hat{\mathbf{d}}_V - \mathbf{d}_V\|_2^2
      + \|\hat{\mathbf{d}}_T - \mathbf{d}_T\|_2^2
      + |\hat{k} - k|^2}{\|\mathbf{d}_V\|_2^2 + \|\mathbf{d}_T\|_2^2 + k^2}\,,
\end{equation}
where $\hat{\mathbf{d}}_V$, $\hat{\mathbf{d}}_T$, and $\hat{k}$ are the estimates.
To account for small fluctuations arising from finite samples, when computing the F1 we perform a final pruning step by hard-thresholding $\hat{\mathbf{d}}_T$ at $0.01$, $0.05$, and $0.1$.

\Cref{fig:f1score} shows the F1 score, where panels refer to the values of $p$ and the x-axis to the size of the complex. 
Further, vertical bars are the interquartile ranges computed across the $20$ simulations.
Overall, \Cref{alg:inference} correctly identifies active triangles among all the $3$-cliques of the graph, i.e., the simplices where the entries of $\hat{\mathbf{d}}_T$ are non null.
Specifically, the F1 increases with the threshold, and \Cref{alg:inference} reaches nearly perfect detection at $0.05$.
Hence, the estimates for absent triangles are mainly numerical fluctuations.
Additionally, in case of sparser complexes, the benefits of hard-thresholding are higher. Finally, \Cref{fig:synth} reports the NMSE for estimating $(k, \mathbf{d}_V, \mathbf{d}_T)$, showing that \Cref{alg:inference} achieves low error across all settings.
\vspace{-0.2cm}\section{Conclusions}
\label{sec:conclusions}
This work introduced SGMs for modeling random variables supported on simplicial complex within a single, parametrized Gaussian model.  
By adding independent random components at each topological level, SGM extends Hodge-theoretic decompositions to account for uncertainty.  
Motivated by practical applications, we derive the edge-level marginal distribution from the SGM, which captures latent vertex and triangle effects. Building on this, we formulate a convex optimization problem to estimate the parameters of the associated precision matrix.
Our algorithmic solution hinges on block-coordinate optimization, and it accurately recovers both the latent triangles and the statistical dependencies from edge observations. As future works, we plan to relax Assumption 1 and to move beyond synthetic data to fully assess the potential of SGMs in real-world scenarios. We will also compare the performance of \Cref{alg:inference} with the inference algorithm proposed in \cite{sardellitti2022probabilistic} in estimating higher-order topology and the overall data distribution.
Additionally, SGM naturally opens to a factorization of the joint distribution shaped by the incidence relations over the complex, which we regard as an important future research direction.

\clearpage
\spara{Acknowledgments.} The work was supported by the SNS JU project 6G-GOALS under the EU’s Horizon program Grant Agreement No 101139232, by Huawei Technology France SASU under Grant N. Tg20250616041, and by the  European Union under the Italian National Recovery and Resilience Plan of NextGenerationEU, partnership on Telecommunications of the Future (PE00000001- program RESTART). 
\balance

\bibliographystyle{IEEEbib}
\bibliography{strings,refs}

\end{document}